\newcolumntype{L}[1]{>{\raggedright\arraybackslash}p{#1}}
\newcolumntype{C}[1]{>{\centering\arraybackslash}p{#1}}
\newcolumntype{R}[1]{>{\raggedleft\arraybackslash}p{#1}}
\def\eqref#1{equation~\ref{#1}}
\def\1{\bm{1}}
\DeclareMathAlphabet{\mathsfit}{\encodingdefault}{\sfdefault}{m}{sl}
\SetMathAlphabet{\mathsfit}{bold}{\encodingdefault}{\sfdefault}{bx}{n}
\newcommand{\R}{\mathbb{R}}
\def\SAN{{SAN}}
\def\neu{\mathcal{N}}
\def\lin{\mathrm{span}}
\def\pooling{\mathrm{Pool}}
\def\N{\mathbb{N}}
\def\relu{\mathrm{ReLU}}
\DeclareMathOperator*{\supp}{supp}
\def\relu{\mathrm{ReLU}}
\begin{document}
\title{Set Aggregation Network as a Trainable Pooling Layer}

\author{\L{}ukasz Maziarka\inst{1,2} \and
Marek \'Smieja\inst{1} \and
Aleksandra Nowak\inst{1} \and
Jacek Tabor\inst{1} \and
\L{}ukasz Struski\inst{1} \and
Przemys\l{}aw Spurek\inst{1}}

\authorrunning{L. Maziarka et al.}

\institute{Faculty of Mathematics and Computer Science\\
Jagiellonian University\\
\L{}ojasiewicza 6, 30-348, Krak\'ow, Poland\\
\email{marek.smieja@ii.uj.edu.pl} \and
Ardigen \\ Podole 76, 30-394, Krak\'ow, Poland}

\maketitle              

\begin{abstract}
Global pooling, such as max- or sum-pooling, is one of the key ingredients in deep neural networks used for processing images, texts, graphs and other types of structured data.
Based on the recent DeepSets architecture proposed by Zaheer et al. (NIPS 2017), we introduce a Set Aggregation Network (\SAN{}) as an alternative global pooling layer. In contrast to typical pooling operators, \SAN{} allows to embed a given set of features to a vector representation of arbitrary size. We show that by adjusting the size of embedding, \SAN{} is capable of preserving the whole information from the input. In experiments, we demonstrate that replacing global pooling layer by \SAN{} leads to the improvement of classification accuracy. Moreover, it is less prone to overfitting and can be used as a regularizer.

\keywords{Global pooling \and Structured data \and Representation learning \and Convolutional neural networks \and Set processing \and Image processing.}
\end{abstract}
\section{Introduction}

Deep neural networks are one of the most powerful machine learning tools for processing structured data such as images, texts or graphs \cite{ronneberger2015u, iizuka2017globally}. While convolutional or recurrent neural networks allow to extract a set of meaningful features, it is not straightforward how to vectorize their output and pass it to the fully connected layers. 

One typical approach to this problem relies on flattening the given tensor. However, the flattened vector may contain a lot of redundant information, which in turn may lead to overfitting. Moreover, flattening cannot be followed by a dense layer (e.g. in classification models), when the input has varied size \cite{karpathy2015deep, ciresan2011flexible}. This situation often appears in graphs and texts classification, but is also common in learning from images of different resolutions \cite{brin1998anatomy,frasconi1998general, he2015spatial}. 

In order to produce a fixed-length vector, a maximum or sum function may be applied to the learned data representations. This operation is commonly known as the global pooling. In image recognition the data is frequently aggregated by computing the average value over the channels of the feature map tensor obtained by the backbone network. Such vector is then passed to the predictor head. This is the case in numerous large scale networks such as ResNet\cite{he2016resnet}, DenseNet\cite{Huang2017DenselyCC} or, more recent, Amoeba-Net\cite{real2018amoeba}. In Graph Neural Networks the representation of a given node is usually computed by recursively mean-pooling the representations of its neighbours\cite{kipf2016semi, xu2018how}. Despite its wide applicability, the global pooling layer is not able to fully exploit the information from the input data, because it does not contain trainable parameters. Moreover, the global pooling cannot adjust the dimensionality of the representation, because the size of its result is solely determined by the number of input channels.

An additional requirement often imposed on the aggregation function is the invariance to the permutation of the input. This constraint arises as a consequence of set processing, and is present, for instance, in 3D point cloud recognition \cite{Qi2017PointNetDL} and graph analysis \cite{xu2018how}. The issue of efficiently learning a representation of permutation invariant inputs  was recently studied by Zaheer et al. \cite{zaheer2017deep}, who proposed a unified methodology for the processing of sets by neural networks. Their model, called DeepSets, is based on the idea of summing the representations of the sets elements. This concept was also further developed by \cite{murphy2018janossy} who define the pooling function as the average of permutation-sensitive maps of all reorderings of the sequence of the set members. Those permutation-sensitive functions may be modeled by recurrent networks and allow for learning representations that utilize high order dependencies between the set points already at the aggregation time. Neural networks capable of processing sets where also analyzed by \cite{sannai2019universal}, who prove that the studied permutation invariant/equivariant deep models are universal approximators of permutation invariant/equivariant functions. 

In this paper, we propose Set Aggregation Network (\SAN{}), an alternative to the global pooling operation, which guarantees no information loss. For this purpose, we adapt the DeepSets architecture  to embed a set of features retrieved from structured data into a vector of fixed length. Contrary to pooling operation, the parameters of \SAN{} are trainable and we can adjust the size of its representation. In addition to Zaheer et al. \cite{zaheer2017deep}, we prove that for a sufficiently large latent space, \SAN{} learns a unique representation of every input set, which justifies this approach from a theoretical perspective  (Theorem~\ref{thm:main}). 

Our experimental results confirm that replacing global pooling by \SAN{} leads to the improvement of classification accuracy of convolutional neural networks used in classification (Section \ref{sec:exp}). Moreover, \SAN{} is less prone to overfitting, which allows to use it as a regularizer. The experiments were carried out on a small network architecture as well as on the large-scale ResNet model.

\begin{figure*}[t] 
\centering
    \includegraphics[width=0.9\textwidth]{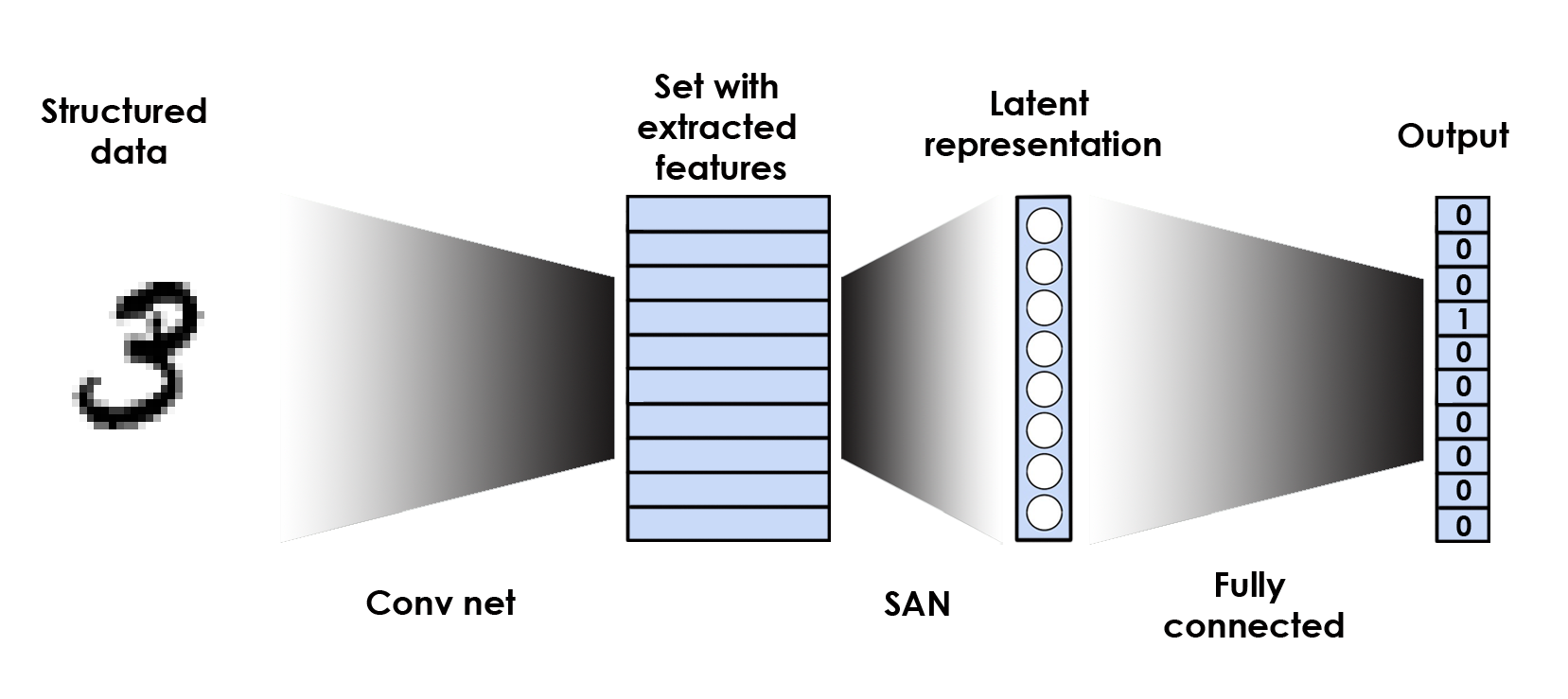}
  \caption{SAN is an intermediate network which is responsible for learning a vector representation using a set of features extracted from of structured data. \label{fig:SAN} } 
\end{figure*}

\section{Set Aggregation Network}

Suppose that we want to process structured data $X = (x_i)_i \subset \R^D$ by the use of a neural network. Some practical examples of $X$ may include a sequence of word embeddings, image represented as a set of pixels or a graph structure. In this paper, we study one of the typical architectures used for processing such data. It consists of two networks combined with an intermediate pooling layer:
\begin{equation}
X = (x_i)_i \stackrel{\Psi}{\to} (\Psi x_i)_i \stackrel{\pooling}{\to}\pooling \{ \Psi(x_i): i \} \stackrel{\Phi}{\to} \R^N.
\end{equation}

The first network $\Psi: \R^{aD} \to \R^K$, where $a \in \N$, is responsible for extracting meaningful features of structured data. In the case of images it can be a convolutional network, while for a sequential data, such as texts, it may be a recurrent neural network. This network transforms elements of $X$ sequentially and produces a set (or sequence) of $K$-dimensional vectors. Depending on the size of $X$ (length of a sentence or image resolution), the number of vectors returned by $\Psi$ may vary. To make the response of $\Psi$ equally-sized, a global pooling is applied, which returns a single $K$-dimensional vector. A pooling layer, commonly implemented as a sum or maximum operation over the set of $K$-dimensional vectors, gives a vector representation of structured object. Finally, a network $\Phi: \R^K \to \R^N$ maps the resulting representation to the final output.

The basic problem with the above pipeline lies in the pooling layer. Global pooling ``squashes'' a set of $K$-dimensional vectors into a single output with $K$ attributes. A single $K$-dimensional output vector may be insufficient to preserve the whole information contained in the input set (especially for large sets and small $K$), which makes a pooling operation the main bottleneck of the above architecture. In this paper, we would like to address this problem. We focus on defining more suitable aggregation network, which will be able to learn a sufficiently rich latent representation of structured data. 

To replace a pooling layer, we extend DeepSets architecture introduced in \cite{zaheer2017deep} to the case of structured data. In consequence, we define an aggregation network, which embeds a set of extracted features to a fixed-length representation. First, we recall a basic idea behind pioneering work of Zaheer et al. and explain its use as an alternative to the classical pooling layer. In the next section, we prove that this framework is able to preserve the whole information contained in the set structure.

Let $f: \R^D \supset X  \to y \in Y$ be a function, which maps sets $X =(x_i)_i$ to some target values $y \in Y$. Since $f$ deals with sets, then the response of $f$ should be invariant to the ordering of set elements. Zaheer et al.~\cite{zaheer2017deep} showed that to realize this requirement $f$ has to be decomposed in the form:
\begin{equation}
f(X) = \rho(\sum_{i} \tau(x_i)),
\end{equation}
for suitable transformations $\rho,\tau$. In the case of neural networks, $f$ can be implemented by constructing two networks $\tau$ and $\rho$. The first network processes elements of a given set $X$ sequentially. Next, the response of $\tau$ is summarized over the whole elements of $X$ and a single vector is returned. Finally, a network $\rho$ maps the resulting representation to the final output.

\begin{figure*}[t] 
\centering
    \includegraphics[width=0.3\textwidth]{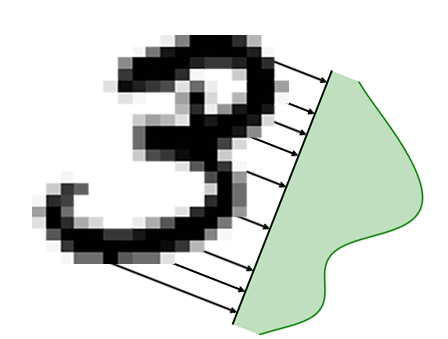}
    \includegraphics[width=0.5\textwidth]{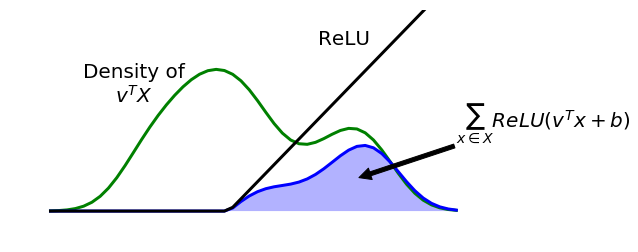}
  \caption{The idea of our approach is to aggregate information from projections of a set onto several one-dimensional subspaces (left). Next non-linear activation function is applied to every set element and the results are aggregated (right). \label{fig:tomography} } 
\end{figure*}

One can directly adapt the above architecture to the pipeline considered in this paper. Namely, instead of taking the maximum or sum pooling over the set of extracted features, we define a separate neural network $\tau$ to compute the summarized activation for all set elements (the role of $\rho$ is played by a network $\Phi$ in our framework). We refer to this network as set aggregation network (\SAN), see Figure \ref{fig:SAN}. If $\tau$ contains $M$ output neurons, then we get $M$-dimensional vector representation of the structured data. In contrast to pooling operation, which always returns $K$-dimensional vector ($K$ is a dimension of input feature vectors), the size of representation produced by \SAN{} may be adjusted to a given problem. Moreover, their parameters are trainable and thus we allow for learning the most convenient representation of structured data. Although \SAN{} is designed to process permutation invariant structures (sets), one may add special attributes to indicate the ordering of extracted features. One way is to use the normalization of the elements index or its trigonometric transformation \cite[Section 3.5]{vaswani2017attention}.

The following remark shows that max-pooling is a special case of \SAN{}.
\begin{remark}
Observe that max-pooling is a special case of \SAN{}. Clearly, for non-negative scalar data $X = (X_i) \subset \R$ and function $\tau_p(x) = x^p$, we have:
\begin{equation}
\tau^{-1}(\sum_i \tau(x_i)) \to \max_i(x_i) \text{ , as } p \to \infty.
\end{equation}
To obtain a maximum, we use $\tau$ as the activity function in aggregative neuron, which is followed by a layer with its inverse. By extending this scheme, we can get a maximum value for every coordinate. Additionally, to deal with negative numbers, we first take the exponent followed by logarithm after the aggregation. 
\end{remark}

\section{Theoretical Analysis}

Although Zaheer et al. theoretically derived the form of $f$ as the only permutation invariant transformation operating on sets, there is no guarantees that this network is capable of learning a unique representation for every set. In this section we address this question and show that if $\tau$ is a universal approximator, then $\sum_{x \in X} \tau(x)$ gives a unique embedding of every set $X$ in a vector space.

Before a formal proof, we first give an illustrative argument for this fact. A typical approach used in computer tomography applies Radon transform \cite{radon1986determination,van2004short} to reconstruct a function (in practice the 2D or 3D image) from the knowledge of its integration over all one-dimensional lines. A similar statement is given by the  Cramer-Wold Theorem \cite{cramer1936some}, which says that for every two distinct measures one can find their one-dimensional projection which discriminates them. This implies that without loss of information we can process the set $X \subset \R^K$ through its all one-dimensional projections $v^T X \subset \R$, where $v \in \R^K$.

In consequence, we reduce the question of representing a multidimensional set to the characterization of one-dimensional sets. Next, one can easily see that the one-dimensional set $S \subset \R$ can be retrieved from the knowledge of aggregated ReLU on its translations: $b \to \sum_i \relu(s_i+b)$, see Figure \ref{fig:tomography}.
Summarizing the above reasoning, we obtain that the full knowledge of a set $X \subset \R^K$ is given by the scalar function
\begin{equation}
\R^K \times \R \ni (v,b) \to \sum_i \relu(v^Tx_i+b). 
\end{equation}
Now, given $M$ vectors $v_i \in \R^K$ and biases $b_i \in \R$, we obtain the fixed-size representation of the set $X \subset \R^K$ as a point in $\R^M$ given by
\begin{equation}
[\sum_i \relu(v_1^Tx_i+b_1),\ldots,\sum_i \relu(v_M^Tx_i+b_M)] \in \R^M.
\end{equation}
The above transformation directly coincides with a single layer \SAN{} parametrized by ReLU function. Thus for a sufficiently large number of neurons, \SAN{} allows to uniquely identify every input set.

Now, we show formally that the above reasoning can be extended to a wide range of activity functions. For this purpose, we will use the UAP (universal approximation property). We say that a family of neurons $\neu$ has UAP if for every compact set $K \subset \R^D$ and a continuous function $f:K \to \R$ the function $f$ can be arbitrarily close approximated with respect to supremum norm  by $\lin (\neu)$ (linear combinations of elements of $\neu$). We show that if a given family of neurons satisfies UAP, then the corresponding \SAN{} allows to distinguish any two  sets:

\begin{theorem} \label{thm:main}
Let $X,Y$ be two sets in $\R^D$. Let $\neu$ be a family of functions having UAP.

If
\begin{equation} \label{eq:1}
\tau(X)=\tau(Y) \mbox{ , for every }\tau \in \neu,
\end{equation}
then $X = Y$.
\end{theorem}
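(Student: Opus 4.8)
The plan is to encode each set by its empirical measure and then use the UAP to promote the hypothesis from the family $\neu$ to the whole space of continuous functions. Concretely, for a finite set $X=(x_i)_i\subset\R^D$ I would read $\tau(X)$ as the aggregated response $\sum_i \tau(x_i)$, and associate to $X$ the finite measure $\mu_X=\sum_i \delta_{x_i}$, so that $\tau(X)=\int \tau\,d\mu_X$. The hypothesis \eqref{eq:1} then states $\int \tau\,d\mu_X=\int \tau\,d\mu_Y$ for every $\tau\in\neu$, and by linearity of the integral the same equality holds for every $g\in\lin(\neu)$. The goal is thereby reduced to showing $\mu_X=\mu_Y$, which is equivalent to $X=Y$ as (multi)sets.

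First I would fix a compact set $K\subset\R^D$ containing both $X$ and $Y$; since both are finite this is immediate (take a large closed ball). By the UAP of $\neu$, every continuous $f\colon K\to\R$ can be uniformly approximated on $K$ by functions $g\in\lin(\neu)$. The crucial estimate is that the pairing with $\mu_X$ is continuous in the supremum norm, namely $|\int f\,d\mu_X-\int g\,d\mu_X|\le \|f-g\|_\infty\,\mu_X(K)$, where $\mu_X(K)$ is finite (it equals the number of points of $X$ counted with multiplicity), and likewise for $\mu_Y$. Hence, passing to the limit along a sequence $g_n\to f$ uniformly, I obtain $\int f\,d\mu_X=\int f\,d\mu_Y$ for every continuous $f$ on $K$.

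To finish, I would invoke the fact that a finite Borel measure on a compact set is determined by its integrals against continuous functions (via the Riesz representation theorem, or directly by approximating indicators of closed sets by continuous functions using Urysohn's lemma), which yields $\mu_X=\mu_Y$. Reading off the atoms of these purely atomic measures recovers exactly the points of $X$ and $Y$ together with their multiplicities, so $X=Y$. As a sanity check, already taking $f\equiv 1$ forces $X$ and $Y$ to have equal cardinality, consistent with the conclusion.

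I expect the main obstacle to be precisely the middle step: the hypothesis only gives agreement on $\neu$, while the UAP only guarantees \emph{uniform} approximation on compact sets, so one must ensure the limit passes through the integrals. This is exactly where finiteness of the total mass $\mu_X(K)$ is essential, since it is what lets a uniformly small approximation error control the error in the integrals. The remaining ingredients (the choice of $K$ and the determinacy of finite measures by continuous test functions) are standard measure-theoretic facts.
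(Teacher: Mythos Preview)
Your proposal is correct and follows essentially the same approach as the paper: encode $X,Y$ as empirical measures, use linearity and the UAP to extend the equality $\int\tau\,d\mu_X=\int\tau\,d\mu_Y$ from $\neu$ to all continuous functions on a compact set containing $X\cup Y$, and then conclude that the measures coincide. The paper carries out your parenthetical alternative explicitly---it builds continuous approximations $\phi_n$ to indicators of balls and applies dominated convergence---whereas you package this step as an appeal to Riesz/Urysohn; the underlying argument is the same.
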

\begin{proof}
Let $\mu$ and $\nu$ be two measures representing sets $X$ and $Y$, respectively, i.e. $\mu = \1_{X}$ and $\nu = \1_Y$. We show that if $\tau(X) = \tau(Y)$ then $\mu$ and $\nu$ are equal.

Let $R>1$ be such that $X \cup Y \subset B(0,R-1)$, where $B(a,r)$ denotes the closed ball centered at $a$ and with radius $r$. To prove that measures $\mu,\nu$ are equal it is sufficient to prove that they coincide on each ball $B(a,r)$ with arbitrary $a \in B(0,R-1)$ and radius $r<1$.

Let $\phi_n$ be defined by
\begin{equation}
\phi_n(x)=1-n \cdot d(x,B(a,r)) \mbox{ for }x \in \R^D,
\end{equation}
where $d(x,U)$ denotes the distance of point $x$ from the set $U$.
Observe that $\phi_n$ is a continuous function which is one on $B(a,r)$ an and zero on $\R^D \setminus B(a,r+1/n)$, and therefore $\phi_n$ is a uniformly bounded sequence of functions which converges pointwise to the characteristic funtion $\1_{B(a,r)}$ of the set $B(a,r)$.

By the UAP property we choose $\psi_n \in \lin(\neu)$ such that 
\begin{equation}
\supp_{x \in B(0,R)} |\phi_n(x)-\psi_n(x)| \leq 1/n.
\end{equation}
Thus $\psi_n$ restricted to $B(0,R)$ is also a uniformly bounded sequence of functions which converges pointwise to $\1_{B(a,r)}$. 
Since $\psi_n \in \neu$, by (\ref{eq:1}) we get
\begin{equation}
\sum_{x \in X} \mu(x) \psi_n(x) =\sum_{y \in Y} \nu(y) \psi_n(y).
\end{equation}
Now by the Lebesgue dominated convergence theorem
we trivially get
\begin{equation}
\begin{array}{l}
\displaystyle{\sum_{x \in X} \mu(x) \psi_n(x)=\int_{B(0,R)} \psi_n(x) d\mu(x) \to \mu(B(a,r)),} \\[1ex]
\displaystyle{\sum_{y \in Y} \nu(y) \psi_n(y)=\int_{B(0,R)} \psi_n(x) d\nu(x) \to \nu(B(a,r)),}
\end{array}
\end{equation}
which completes proof.
\end{proof}

\section{Experiments} \label{sec:exp}

We apply \SAN{} to typical architectures used for processing images. Our primary goal is to compare \SAN{} with global pooling in various settings. First, we focus on classifying images of the same sizes using a small convolutional neural network. Next, we extend this experiment to the case of images with varied resolutions. Finally, we consider a large scale ResNet architecture and show that replacing a global pooling layer by \SAN{} leads to the improvement of classification performance. Our implementation is available at github\footnote{\url{https://github.com/gmum/set-aggregation}.}.

\begin{table*}[t]
\caption{Architecture summary of a small neural network ($N$ is the size of the input to the layer, and $D$ is the number of output neurons from the \SAN{} layer).}
\label{tab:architecture_CV}

\centering
\begin{tabular}{ccc | ccc | ccc}
\toprule
\multicolumn{3}{c |}{Flatten} & \multicolumn{3}{c |}{Max/Avg-pooling} & \multicolumn{3}{c}{SAN} \\
Type & Kernel & Outputs &         Type & Kernel & Outputs &         Type & Kernel & Outputs \\
\midrule
     Conv 2d &    3x3 &      32 &      Conv 2d &    3x3 &      32 &      Conv 2d &    3x3 &      32 \\
 Max pooling &    2x2 &         &  Max pooling &    2x2 &         &  Max pooling &    2x2 &         \\
     Conv 2d &    3x3 &      64 &      Conv 2d &    3x3 &      64 &      Conv 2d &    3x3 &      64 \\
 Max pooling &    2x2 &         &  Max pooling &    2x2 &         &  Max pooling &    2x2 &         \\
     Conv 2d &    3x3 &      64 &      Conv 2d &    3x3 &      64 &      Conv 2d &    3x3 &      64 \\
     Flatten &        &         &  Max/Avg pooling &    NxN &         &          SAN &        &     $D$ \\
       Dense &        &     128 &        Dense &        &     $D$ &        Dense &        &     10 \\
       Dense &        &      10 &        Dense &        &      10 &              &        &         \\
\bottomrule
\end{tabular}

\end{table*}

\subsection{Small Classification Network}

We considered a classification task on CIFAR-10 \cite{krizhevsky2009learning}, which consists of 60 000 color images of the size 32x32 and the Fashion-MNIST, composed of 70 000 gray-scale pictures of size 28x28 \cite{xiao2017online}. We used small neural networks with 3 convolutional layers with ReLU activity function and a max-pooling between them, see Table \ref{tab:architecture_CV} for details.   
 
To aggregate resulted tensor we considered the following variants: 
\begin{itemize}
\item {\bf flatten}: We flattened a tensor to preserve the whole information from the previous network. 
\item {\bf conv-1x1}: We applied 1x1 convolutions with one channel and flattened the output. This reduces the number of parameters in subsequent dense layer compared to the classical flatten approach.
\item {\bf max-pooling}: We applied max pooling along spatial dimensions (width and height of a tensor) to reduce the dimensionality. In consequence, we obtained a vector of the size equal the depth of the tensor.  
\item {\bf avg-pooling}: We considered a similar procedure as above, but instead of max pooling we used average pooling. 
\item {\bf SAN}: We used a single layer \SAN{} as an alternative aggregation. The resulting tensor was treated as a set of vectors with sizes equal to the depth of the tensor. Moreover, the (normalized) indices were added to every vector to preserve the information about local structure. 
\end{itemize}

\SAN{} allows to select the number of output neurons. For the experiment, we considered the following numbers of output neurons: $\{128,256,512,1024,2048,4096\}$. To use the same number of parameters for global pooling and conv1x1 approaches we followed them by a dense network with identical number of neurons to \SAN{}. In the case of flatten, we obtained a comparable number of parameters to the other networks trained on the size 4 096. In each case we used additional two dense layers, except for the \SAN{} model, where only one dense layer was used. All models were trained using adam optimizer with a learning rate $10^{-3}$ and batch size $256$. 
We used 5 000 images for the validation set, 5 000 images for the test set for both CIFAR-10 and Fashion-MNIST. We trained every model on the remaining images.

\begin{table*}[t]
\caption {Classification accuracy on a small network (images with the same resolutions) for different number of parameters used in aggregation layer.}
\label{tab:regularization_acc}
\centering
\footnotesize
\begin{tabular}{l|ccccc|ccccc}
\toprule
    &  \multicolumn{5}{c|}{Fashion MNIST} &  \multicolumn{5}{c}{CIFAR 10} \\ \cmidrule{2-11}
  size    &  flatten & avg-pool &  max-pool & conv-1x1 &  \SAN{} &  flatten & avg-pool &  max-pool & conv-1x1 &  \SAN{}\\
\midrule
128 & -- & 0.852 & 0.901 & {\bf 0.916} & 0.912  & -- & 0.624 & 0.690 & 0.731 & {\bf 0.738} \\ 
256 & -- & 0.877 & 0.908 & {\bf 0.916} & 0.911 &  -- & 0.649 & 0.697 & 0.738 & {\bf 0.739} \\
512 & -- & 0.879 & 0.906 & {\bf 0.918} & 0.910  & -- & 0.671 & 0.701 & 0.722 & {\bf 0.730} \\
1024 & -- & 0.888 & 0.914 & 0.912 & {\bf 0.915}  & -- & 0.659 & 0.683 & 0.722 & {\bf 0.756} \\
2048 & -- & 0.889 & 0.912 & 0.914 & {\bf 0.919}   &  -- & 0.697 & 0.686 & 0.707 & {\bf 0.733} \\
4096 & {\bf0.919} & 0.912 & 0.900 & 0.914 & 0.912  & 0.720 & 0.738 & 0.708 & 0.709 & {\bf 0.762} \\
\bottomrule
\end{tabular}
\end{table*}

\begin{figure*}[h] 
\centering
    \includegraphics[width=0.98\textwidth]{./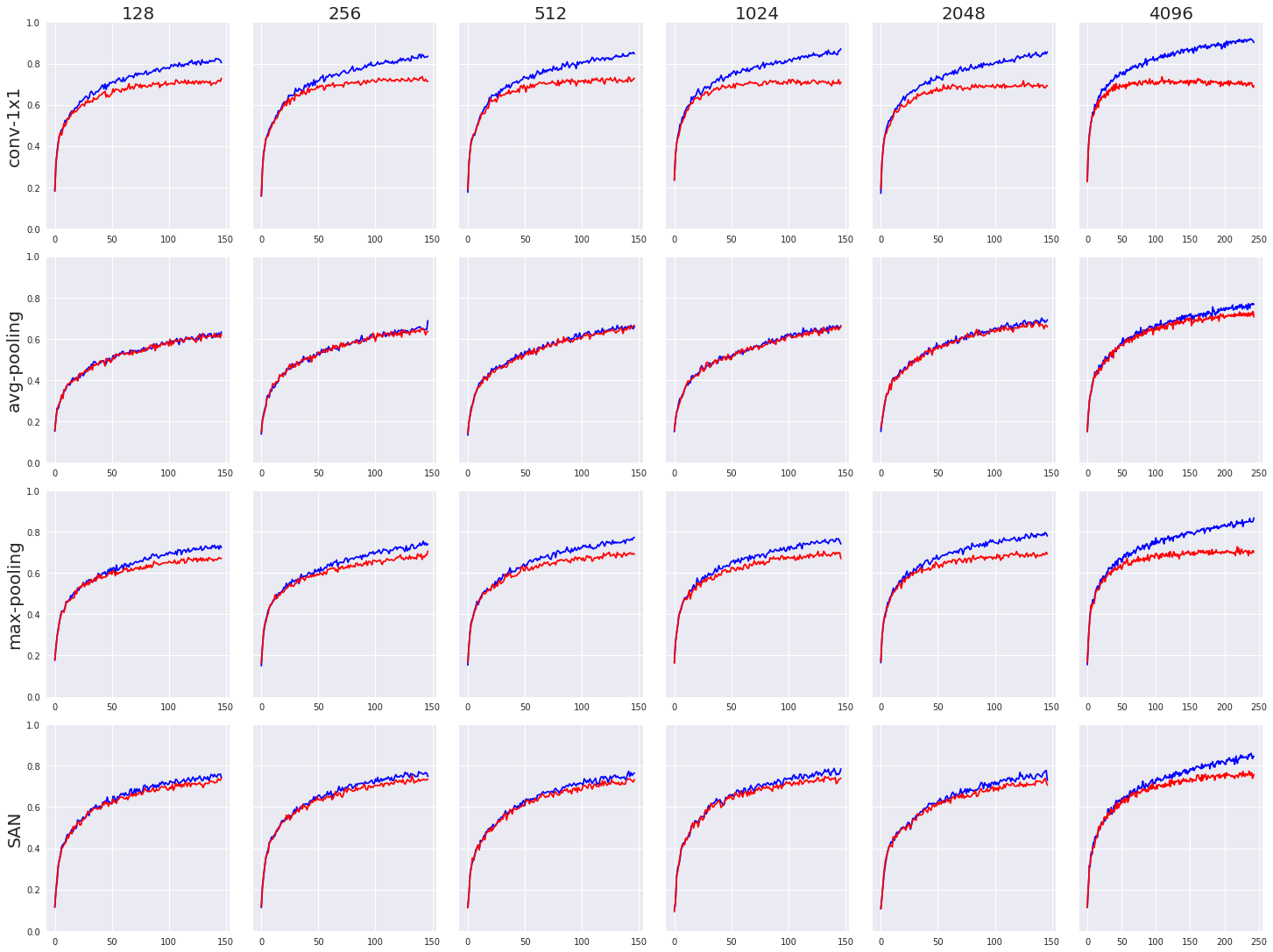}
  \caption{Train (blue) and test accuracy (red) on CIFAR-10 (images with the same resolutions) for different number of parameters used in aggregation layer. \label{fig:regularization}}
\end{figure*}

\begin{figure*}[h] 
\centering
    \includegraphics[width=0.98\textwidth]{./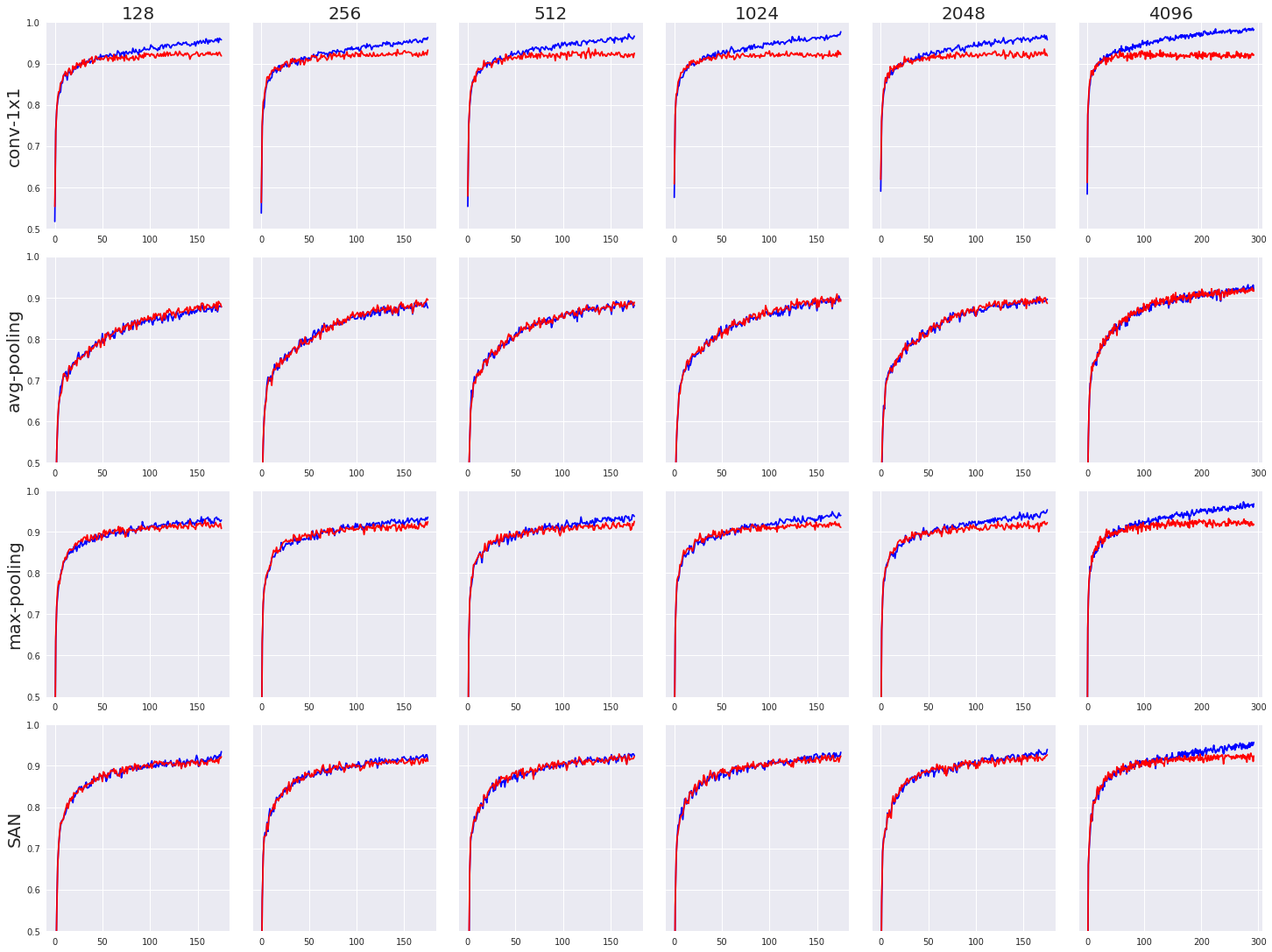}
  \caption{Train (blue) and test accuracy (red) on Fashion-MNIST (images with the same resolutions) for different number of parameters used in aggregation layer. \label{fig:regularization_fashion}}
\end{figure*}

It is evident from  Table \ref{tab:regularization_acc} that \SAN{} outperformed all reported operations on CIFAR-10. In addition, it gave higher accuracy than flatten when both approaches have a comparable number of parameters (last row). We verified that lower results of flatten were caused by its slight overfitting to the training data. Adding dropout to flatten makes both approaches comparable. In the case of significantly simpler Fashion-MNIST dataset, the differences between all methods are smaller. \SAN{} achieved an identical accuracy to flatten for the size $2048$. Note however, that the flatten approach uses twice as much parameters as \SAN{}. This demonstrates that by the use of \SAN{} the network can be simplified without any loss in accuracy.

To get more insight to the results, we present learning curves for CIFAR-10 and Fashion-MNIST in Figure \ref{fig:regularization} and Figure \ref{fig:regularization_fashion}, respectively. It is evident that max-pooling and conv-1x1 highly overfitted to training data, especially on the more demanding CIFAR-10 dataset. Although avg-pooling presented comparable accuracy on train and test sets, its overall performance was quite low, and matched that of the other methods only for high number of the parameters. In contrast, \SAN{} provided high accuracy and did not suffer from high overfitting to training data. In addition, for the Fashion-MNIST experiment, \SAN{} converged to high performance (over 90\%) much faster than the corresponding avg-pooling approach.

This experiment  partially confirms our theoretical result that for a sufficient number of neurons, \SAN{} is able to preserve the whole information contained in the input. On the other hand, it shows that \SAN{} can work as a regularizer, which prevents the model from overfitting. 

\subsection{Classifying Images with Varied Sizes}\label{sec:im}

\begin{table*}[t]
\caption {Classification accuracy for images with varied resolutions.}
\label{tab:resIm}
\centering
\begin{tabular}{lc|ccc|ccc}
\toprule
   &       &  \multicolumn{3}{c|}{Trained on all resolutions} &  \multicolumn{3}{c}{Trained only on original resolution} \\ \cmidrule{3-8}
Dataset   &   Image size    &  max-pool &  avg-pool &     \SAN{} &  max-pool &  avg-pool &  \SAN{} \\
\midrule
Fashion MNIST & 14x14 &       0.8788 & 0.8753 & \textbf{0.8810} &       0.2519 & 0.270 &       \textbf{0.2884}  \\
         & 22x22 &       0.8969 & 0.9002  &  \textbf{0.9064} &       0.7380 & 0.801 &         \textbf{0.8247} \\
         & 28x28 &       0.9023 & 0.9078 &  \textbf{0.9111} &       0.9062 & 0.904 &         \textbf{0.9150}  \\
         & 42x42 &       0.9020 & \textbf{0.9041} &  0.9033 &       0.5548 & 0.6511 &         \textbf{0.6893}  \\
         & 56x56 &       0.8913 & 0.8960 &  \textbf{0.8966} &       0.3274 & 0.3809 &         \textbf{0.4515}  \\
\midrule         
CIFAR-10 & 16x16 &       0.5593 &     0.5820 & \textbf{0.6305} &       0.3251 & 0.2714 &         \textbf{ 0.3808} \\
         & 24x24 &       0.6450 &  0.6935 &  \textbf{0.7317} &       0.6409 & 0.6130 &         \textbf{0.6956} \\
         & 32x32 &        0.6729 & 0.7018 &  \textbf{ 0.7565} &       0.7131 &  \textbf{0.7637} &  0.7534 \\
         & 40x40 &        0.6739 &  0.6914 &  \textbf{0.7430} &       0.6512 & 0.6780 &         \textbf{0.7234} \\
         & 48x48 &       0.6770 &    0.6625 &  \textbf{0.7626} &       0.5325 &    0.5366 &         \textbf{0.6264}\\
\bottomrule
\end{tabular}
\end{table*}
Most classification models assume that input images are of the same size. If this is not the case, we are forced to scale images at preprocessing stage or use pooling operation as an intermediate layer to apply fully connected layers afterwards. In this experiment, we compared \SAN{} with max-pooling and avg-pooling in classifying images of different sizes. We used analogical architecture as in previous section. Note that we were unable to use flatten or conv-1x1, because the output from convolutional network had different sizes.

We again considered Fashion-MNIST and CIFAR-10 datasets. To create examples with different sizes we used bicubic interpolation on randomly selected images\footnote{For CIFAR-10, original images of size $32 \times 32$ were scaled to $16 \times 16, 24 \times 24, 32 \times 32, 40 \times 40, 48 \times 48$. For Fashion-MNIST, images of size $28 \times 28$ were scaled to $14 \times 14, 22 \times 22, 42 \times 42, 56~\times~56$.}. We examined two cases. In the first one, the network was trained only on images with original resolution, but tested on images with different resolutions. In the second case, scaled images were used both in training and testing. 

The results presented in Table \ref{tab:resIm} show that {\bf \SAN} produced more accurate results than both global pooling approaches for almost every image resolution. Observe that the results are worse when only images with $32 \times 32$ size were used in train set. It can be explained by the fact that convolutional filters were not trained to recognize relevant features from images with different scales. In this case, the differences are even higher.

\subsection{Large Scale ResNet Experiment}

\begin{table}[t]
\caption{Test accuracy on CIFAR-10 using the ResNets architecture. The first column corresponds to the original ResNet model. The ResNet-avg/max/conv-1x1 models come with an additional penultimate dense layer of size $4096$, in order to match the number of parameters in \SAN{} \label{tab:resnet_acc}.}
\vspace{0.3cm}
\centering 
\begin{tabular}{L{1.cm}|c|c|c|c|c|c}
\toprule

{\bf } & {\bf original} & {\bf ResNet-avg} & {\bf ResNet-max}  & {\bf ResNet-conv1x1} & {\bf ResNet-\SAN} \\
\midrule
 error & 0.0735 & 0.0724 &  0.0782 & 0.0780 & {\bf 0.0697} \\ 
\bottomrule
\end{tabular}
\end{table}

In previous experiments we deliberately use a rather simple network in order the examine the effect of only alternating the aggregation method. This allows for the assessment of the methods performance in isolation from any additional layers which could further improve models regularization effect and which are necessary to efficiently train a vast network (such as, for instance, batch norm\cite{ioffe2015batchnorm}). In this experiment, we tested the impact of using \SAN{} in a large-scale network. 

For this purpose we chose the ResNet-56 model\cite{he2016resnet}, which consists of $56$ layers. The original ResNet uses the global average pooling approach followed by a dense layer, which projects the data into the output dimension. Our modification relies on replacing global pooling by \SAN{}. As introduction of the \SAN{} with $4096$ vectors comes at a cost of increased number of parameters, we added an additional, penultimate dense layer with hidden dimension $4096$ to the ResNet for the average- and the max-pooling, and the conv-1x1, in order to allow for fair comparison. 

The results for CIFAR-10 dataset are reported in Table \ref{tab:resnet_acc}. It is evident that the introduction of \SAN{} to the original ResNet architecture led to the improvement of classification accuracy. Moreover, \SAN{} outperformed other aggregation approaches.

\section{Conclusion}

In this paper, we proposed a novel aggregation network, \SAN{}, for processing structured data. Our architecture is based on recent methodology used for learning from permutation invariant structures (sets) \cite{zaheer2017deep}. In addition, to Zaheer's work, we showed that for a sufficiently large number of neurons, \SAN{} allows to preserve the whole information contained in the input. This theoretical result was experimentally confirmed applying convolutional network to image data. Conducted experiments demonstrated that the replacing of global pooling by \SAN{} in typical neural networks used for processing images leads to higher performance of the model.

\section*{Acknowledgements}

This work was partially supported by the National Science Centre (Poland) grants numbers: 2018/31/B/ST6/00993, 2017/25/B/ST6/01271 and 2015/19/D/ST6/01472.

%
%
\bibliographystyle{splncs04}
\bibliography{bibliography}

\begin{thebibliography}{10}
\providecommand{\url}[1]{\texttt{#1}}
\providecommand{\urlprefix}{URL }
\providecommand{\doi}[1]{https://doi.org/#1}

\bibitem{brin1998anatomy}
Brin, S., Page, L.: The anatomy of a large-scale hypertextual web search
  engine. Computer networks and ISDN systems  \textbf{30}(1-7),  107--117
  (1998)

\bibitem{ciresan2011flexible}
Ciresan, D.C., Meier, U., Masci, J., Maria~Gambardella, L., Schmidhuber, J.:
  Flexible, high performance convolutional neural networks for image
  classification. In: IJCAI Proceedings-International Joint Conference on
  Artificial Intelligence. vol.~22, p.~1237. Barcelona, Spain (2011)

\bibitem{coley2017convolutional}
Coley, C.W., Barzilay, R., Green, W.H., Jaakkola, T.S., Jensen, K.F.:
  Convolutional embedding of attributed molecular graphs for physical property
  prediction. Journal of chemical information and modeling  \textbf{57}(8),
  1757--1772 (2017)

\bibitem{cramer1936some}
Cram{\'e}r, H., Wold, H.: Some theorems on distribution functions. Journal of
  the London Mathematical Society  \textbf{1}(4),  290--294 (1936)

\bibitem{defferrard2016convolutional}
Defferrard, M., Bresson, X., Vandergheynst, P.: Convolutional neural networks
  on graphs with fast localized spectral filtering. In: Advances in Neural
  Information Processing Systems. pp. 3844--3852 (2016)

\bibitem{frasconi1998general}
Frasconi, P., Gori, M., Sperduti, A.: A general framework for adaptive
  processing of data structures. IEEE transactions on Neural Networks
  \textbf{9}(5),  768--786 (1998)

\bibitem{van2004short}
van Ginkel, M., Hendriks, C.L., van Vliet, L.J.: A short introduction to the
  radon and hough transforms and how they relate to each other. Delft
  University of Technology  (2004)

\bibitem{he2016resnet}
{He}, K., {Zhang}, X., {Ren}, S., {Sun}, J.: Deep residual learning for image
  recognition. In: 2016 IEEE Conference on Computer Vision and Pattern
  Recognition (CVPR). pp. 770--778 (June 2016). \doi{10.1109/CVPR.2016.90}

\bibitem{he2015spatial}
He, K., Zhang, X., Ren, S., Sun, J.: Spatial pyramid pooling in deep
  convolutional networks for visual recognition. IEEE transactions on pattern
  analysis and machine intelligence  \textbf{37}(9),  1904--1916 (2015)

\bibitem{Huang2017DenselyCC}
Huang, G., Liu, Z., Weinberger, K.Q.: Densely connected convolutional networks.
  In: 2017 IEEE Conference on Computer Vision and Pattern Recognition (CVPR).
  pp. 2261--2269 (2017)

\bibitem{iizuka2017globally}
Iizuka, S., Simo-Serra, E., Ishikawa, H.: Globally and locally consistent image
  completion. ACM Transactions on Graphics (TOG)  \textbf{36}(4), ~107 (2017)

\bibitem{ioffe2015batchnorm}
Ioffe, S., Szegedy, C.: Batch normalization: Accelerating deep network training
  by reducing internal covariate shift. In: Proceedings of the 32Nd
  International Conference on International Conference on Machine Learning -
  Volume 37. pp. 448--456. ICML'15, JMLR.org (2015)

\bibitem{karpathy2015deep}
Karpathy, A., Fei-Fei, L.: Deep visual-semantic alignments for generating image
  descriptions. In: Proceedings of the IEEE conference on computer vision and
  pattern recognition. pp. 3128--3137 (2015)

\bibitem{kim2014convolutional}
Kim, Y.: Convolutional neural networks for sentence classification. arXiv
  preprint arXiv:1408.5882  (2014)

\bibitem{kipf2016semi}
Kipf, T.N., Welling, M.: Semi-supervised classification with graph
  convolutional networks. arXiv preprint arXiv:1609.02907  (2016)

\bibitem{krizhevsky2009learning}
Krizhevsky, A., Hinton, G.: Learning multiple layers of features from tiny
  images. Tech. rep., Citeseer (2009)

\bibitem{maas-EtAl:2011:ACL-HLT2011IMDB}
Maas, A.L., Daly, R.E., Pham, P.T., Huang, D., Ng, A.Y., Potts, C.: Learning
  word vectors for sentiment analysis. In: Proceedings of the 49th Annual
  Meeting of the Association for Computational Linguistics: Human Language
  Technologies. pp. 142--150 (2011)

\bibitem{murphy2018janossy}
Murphy, R.L., Srinivasan, B., Rao, V., Ribeiro, B.: Janossy pooling: Learning
  deep permutation-invariant functions for variable-size inputs. In:
  International Conference on Learning Representations (2019),
  \url{https://openreview.net/forum?id=BJluy2RcFm}

\bibitem{Pang+Lee:05aMR}
Pang, B., Lee, L.: Seeing stars: Exploiting class relationships for sentiment
  categorization with respect to rating scales. In: Proceedings of the ACL
  (2005)

\bibitem{Qi2017PointNetDL}
Qi, C.R., Su, H., Mo, K., Guibas, L.J.: Pointnet: Deep learning on point sets
  for 3d classification and segmentation. 2017 IEEE Conference on Computer
  Vision and Pattern Recognition (CVPR) pp. 77--85 (2017)

\bibitem{radon1986determination}
Radon, J.: On the determination of functions from their integral values along
  certain manifolds. IEEE transactions on medical imaging  \textbf{5}(4),
  170--176 (1986)

\bibitem{real2018amoeba}
Real, E., Aggarwal, A., Huang, Y., Le, Q.V.: Regularized evolution for image
  classifier architecture search. arXiv preprint arXiv:1802.01548  (2018)

\bibitem{ronneberger2015u}
Ronneberger, O., Fischer, P., Brox, T.: U-net: Convolutional networks for
  biomedical image segmentation. In: International Conference on Medical image
  computing and computer-assisted intervention. pp. 234--241. Springer (2015)

\bibitem{sannai2019universal}
Sannai, A., Takai, Y., Cordonnier, M.: Universal approximations of permutation
  invariant/equivariant functions by deep neural networks. arXiv preprint
  arXiv:1903.01939  (2019)

\bibitem{vaswani2017attention}
Vaswani, A., Shazeer, N., Parmar, N., Uszkoreit, J., Jones, L., Gomez, A.N.,
  Kaiser, {\L}., Polosukhin, I.: Attention is all you need. In: Advances in
  Neural Information Processing Systems. pp. 5998--6008 (2017)

\bibitem{xiao2017online}
Xiao, H., Rasul, K., Vollgraf, R.: Fashion-mnist: a novel image dataset for
  benchmarking machine learning algorithms. arXiv preprint arXiv:1708.07747
  (2017)

\bibitem{xu2018how}
Xu, K., Hu, W., Leskovec, J., Jegelka, S.: How powerful are graph neural
  networks? In: International Conference on Learning Representations (2019),
  \url{https://openreview.net/forum?id=ryGs6iA5Km}

\bibitem{zaheer2017deep}
Zaheer, M., Kottur, S., Ravanbakhsh, S., Poczos, B., Salakhutdinov, R.R.,
  Smola, A.J.: Deep sets. In: Advances in Neural Information Processing
  Systems. pp. 3391--3401 (2017)

\end{thebibliography}

\newpage
\appendix

\section{Graph processing}

Predicting the properties of chemical compounds is one of basic problems in medical chemistry. Since the laboratory verification of every molecule is very time and resource consuming, the current trend is to analyze its activity with machine learning approaches. Typically, a graph of chemical compound is represented as a fingerprint, which encodes predefined chemical patterns in a binary vector (Figure \ref{fig:finger}). However, one can also apply graph convolutional neural networks to learn from a graph structure directly without any initial transformation. 

\begin{figure}[h]
\centering
\includegraphics[width=0.44\textwidth]{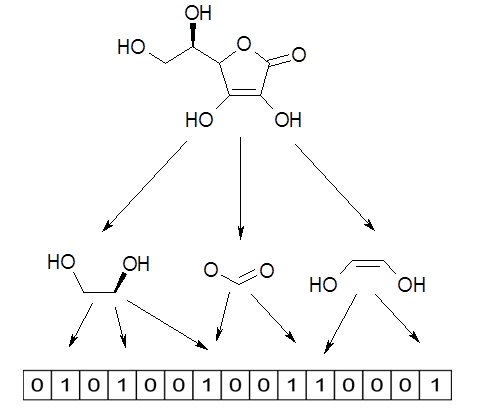}\label{fig:molecule}
\caption{Example of a chemical compound represented by graph and its lossy conversion to fingerprint.\label{fig:finger}} 
\end{figure}

In this experiment, we considered a neural network, {\bf sum-pooling}, developed by Coley et al. \cite{coley2017convolutional}, which is an extension of graph convolutional networks \cite{kipf2016semi}, \cite{defferrard2016convolutional} applied to chemical compounds. Given a set of features extracted by convolutional filters from a graph of compound, the authors use a type of global sum-pooling, which summarizes the results to a fixed length vector. Next, dense layers are applied. We examined the effect of replacing  sum-pooling by a single layer \SAN{}. We used identical number of aggregative neurons in \SAN{} as in the first dense layer of Coley's network to provide equal numbers of parameters in both approaches. Additionally, for a comparison we used a dense network with a comparable number of parameters applied on classical ECFP fingerprint\footnote{\url{https://docs.chemaxon.com/display/docs/Extended+Connectivity+Fingerprint+ECFP}}.

We used Tox21, which is a common benchmark data set for comparing machine learning methods on chemical data. It comprises 12 060 training samples and 647 test samples. For each sample, there are 12 binary labels that represent the outcome (active/inactive) of 12 different toxicological experiments. In consequence, we evaluate methods on 12 classification tasks.

The results presented in Table \ref{tab:chem} indicate that replacing pooling layer by \SAN{} increased the performance in most cases. Although the difference between these methods is not high, obtained model allows to better predict the activity of compounds. Moreover, it is evident that the use of convolutional network significantly outperformed fingerprint-based approach. 

\begin{table}[t]
\caption{ROC AUC scores for Tox-21\label{tab:chem}.}
\centering
\begin{tabular}{lccc}
\toprule
{} &    {\bf ECFP} &  {\bf sum-pooling}  &  {\bf SAN} \\
\midrule
ahr        &  $0.783402$ &  $0.843292$ &  $\textbf{0.849239}$ \\
ar         &  $\textbf{0.678241}$ &  $0.464120$ &  $0.589120$ \\
ar-lbd     &  $0.610887$ &  $0.735887$ &  $\textbf{0.782258}$ \\
are        &  $0.691032$ &  $0.804965$ &  $\textbf{0.837223}$ \\
aromatase  &  $0.704507$ &  $\textbf{0.775510}$ &  $0.710601$ \\
atad5      &  $0.595447$ &  $\textbf{0.612683}$ &  $0.568130$ \\
er         &  $0.591733$ &  $0.606814$ &  $\textbf{0.621660}$ \\
er-lbd     &  $0.467754$ &  $0.723913$ &  $\textbf{0.736957}$ \\
hse        &  $0.678906$ &  $\textbf{0.734375}$ &  $0.688672$ \\
mmp        &  $0.776911$ &  $0.864057$ &  $\textbf{0.866305}$ \\
p53        &  $0.502083$ &  $0.728720$ &  $\textbf{0.750149}$ \\
ppar-gamma &  $0.448340$ &  $\textbf{0.700133}$ &  $0.619389$ \\
\bottomrule
\end{tabular}
\end{table}

\section{Sentiment Classification}

Text documents are often represented as sequences of word embeddings and processed by convolutional or recurrent networks, followed by a pooling layer\cite{kim2014convolutional}. In this experiment, the use of \SAN{} is compared to the average- and max- pooling methods in the problem of sentiment classification. For this purpose we consider the IMDB Movie Review and the Movie Review (MR) dataset from~\cite{Pang+Lee:05aMR}. The IMDB~\cite{maas-EtAl:2011:ACL-HLT2011IMDB} consists of $50000$ samples, from which $20\%$ is used for test dataset. The MR review has $10662$ documents. $10\%$ of the data is used both for validation and test examples. 

The embedding layer size used in this experiment is set to $128$ for the IMDB and $64$ for MR.  The vocabulary used for the one-hot encoding of the inputs to the embedding includes $10000$ most frequent words from each dataset. The so obtained word representations are then passed to a single LSTM recurrent network, which produces outputs for each word in the sequence. 
Those outputs are aggregated to form a single vector and passed to the predictor head network. In this setting, we test the \SAN{} approach against the avg- and max- pooling methods. The number of tested output neurons is from the set $\{256,512\}$ for IMDB and $\{128, 256\}$ for MR. Again, to allow for the same number of parameters in the avg- and max- models, we add and additional fully connected layer after the pooling, with the hidden neurons size matching the size of output neurons in \SAN{}. The networks are trained for $5$ epochs with the \textit{Adam} optimization algorithm. The learning rate is equal to $1e-2$ and batch size is $128$ . The results are presented in Table~\ref{tab:nlp}.

\begin{table}[t]
\caption{Test accuracy for the IMDB and MR datasets. The MR model uses less parameters, as it also has significantly less training examples.
\label{tab:nlp}}.
\vspace{0.5cm}
\centering 
\begin{tabular}{l|c|ccc|}
\toprule

{\bf dataset } & {\bf sizes} & {\bf avg-pooling} & {\bf max-pooling}  & {\bf \SAN} \\
\midrule
IMDB & 	256 & \textbf{0.9016} & 0.8930 & 0.8924 \\ 
     &  512 & 0.8892 & \textbf{0.9044} & 	0.8890 \\ 
\midrule
MR & 	128 & 0.5186 & 	\textbf{0.7656} & 0.7510 \\ 
     &  256 & 0.7480 & \textbf{0.7656} & 0.7471 \\ 
\bottomrule
\end{tabular}
\end{table}

In this experiment the \SAN{} approach gave results comparable to other methods, however was not able to exceed them. The average pooling algorithm also performed rather poorly, behaving nearly as simple random guessing for the most simple MR network (size $128$). The highest accuracy was almost always achieved by the max pooling. This may be the result of identifying the largest activation associated with a sentiment-significant word. In the other pooling approaches, this value would be mixed with other obtained vectors indices, and possibly harder to retrieve. Moreover, increasing the number of parameters in \SAN{} does not result in better outcomes, which may suggest that the model has overfit to the training data. One should also point out that no spacial position information has been added to the representations processed by \SAN{}. This may be a drawback in this specific task, as NLP data are sequence-dependant. 

\end{document}